\newcommand{\wlo}{w.\,l.\,o.\,g.\xspace}
\newcommand{\ie}{i.\,e.\xspace}
\newcommand{\eg}{e.\,g.\xspace}
\newcommand{\oneoneea}{(1+1)\nobreakdash-EA\xspace}
\newcommand{\card}[1]{|#1|}
\newcommand{\N}{\mathds{N}}
\newcommand{\R}{\mathds{R}}
\renewcommand{\epsilon}{\varepsilon}
\DeclareMathOperator{\Prob}{Prob}
\newcommand{\xmin}{x_{\min}}
\newcommand{\ftot}{f_{\mathrm{tot}}}
\newtheorem{lemma}{Lemma}
\newtheorem{theorem}{Theorem}
\newenvironment{proof}%
{\begin{trivlist}\item\textbf{Proof:}}%
{\hspace*{\fill}$\Box$\end{trivlist}}
\item\textbf{Proof of #1:}}%
\title{Erratum: Simplified Drift Analysis for Proving  
Lower Bounds in Evolutionary Computation}
\author{Pietro S. Oliveto\\
\small School of Computer Science,\\
\small University of Birmingham,\\
\small  Birmingham, UK
\and Carsten Witt\\
\small DTU Informatics,\\
\small Technical University of Denmark,\\
\small  Kgs.~Lyngby, Denmark
}
\begin{document}

\maketitle

\begin{abstract}
\noindent
This erratum points out an error in the simplified drift theorem (SDT) 
\cite{OlivetoWittPPSN2008, OlivetoWittAlgorithmica2011}. It is also shown 
that a minor modification of one of its conditions is sufficient to 
establish a valid result. In many respects, the new theorem is 
more general than before. 
We no longer assume a Markov process nor a finite search space. 
Furthermore, the proof of the theorem is more compact than the previous ones. 
Finally, previous applications of the SDT are revisited. 
It turns out that all of these either meet the modified condition directly 
or by means of few additional arguments.
\end{abstract}

\section{Introduction}
The so-called simplified drift theorem, first presented in \cite{OlivetoWittPPSN2008},  
deals with stochastic processes that drift away from a target, \ie, processes that 
in expectation increase the distance from the target in a step. For example, 
if $X_t>0$ is the state of the process at time~$t$ and 
the aim is to reach the target state~$0$, then the SDT deals with 
the case 
$E(X_{t+1}-X_{t})\ge \epsilon$. (Sometimes this is called 
negative drift, if perspectives are switched and the aim is to reach 
a maximal state.)

The aim is to show that the process takes a long time to reach the target. 
Intuitively, a drift away from the target is not enough for this. It might well be 
that the expected change is positive, but a direct jump to the optimum occurs with 
a fair probability. Therefore, the simplified drift theorem contains 
a second condition in addition to the drift condition, namely an 
exponential decay of the probability of jumping towards the target, 
more precisely a condition of the kind  
$\Prob(X_{t+1}-X_{t} \le -j ) \le 2^{-j}$ for all natural $j>0$. 
See \cite{OlivetoWittPPSN2008} and 
the extended journal version \cite{OlivetoWittAlgorithmica2011} for a 
precise formulation.

The SDT has found many applications and had a large impact on the way lower 
bounds on the optimization time of randomized search heuristics are proved. 
Unfortunately, we recently discovered an error in its proof. 
In fact all existing variants of the SDT presented before, in any case 
the ones in \cite{OlivetoWittPPSN2008, OlivetoWittAlgorithmica2011}, seem to be
wrong 
unless the second condition is strengthened to something being 
essentially like
$\Prob(\lvert X_{t+1}-X_{t}\vert \ge j ) \le 2^{-j}$, \ie, 
we should also have an exponential decay for the
jumps \emph{towards} the target (note the absolute value).

This erratum is structured as follows. We will present a counterexample to the previous SDT 
in Section~\ref{sec:counter}. In Section~\ref{sec:corrected}, we will present 
and prove a corrected version of the SDT. In Section~\ref{sec:applications}, we 
show that seemingly all applications of the original SDT also satisfy the stronger 
conditions, either immediately or after a few additional arguments.

\section{An Example Where the Original Theorem is Wrong}
\label{sec:counter}

Consider the SDT as presented in \cite{OlivetoWittAlgorithmica2011} (which 
essentially is the new 
Theorem~\ref{theo:modifieddrift} in Section~\ref{sec:corrected} but with 
the weaker second condition 
$\Prob(X_{t+1}-X_t\le -j \mid a< X_t) \le  \frac{r(\ell)}{(1+\delta)^{j}}$).

Let us look into the following Markov process on some  state
space being exponentially large in~$n$, 
say $S=\{-3e^{n},...,3e^n\}$ (the precise size does not matter, 
but the original drift theorem demands a finite search space). 
We set $a:=0$ as target and $b:=n$ as starting point, \ie, 
$X_0 = b = n$.

The stochastic behavior of the process is as follows: Conditioned on 
$X_t\in \mathopen{]}a,b\mathclose{]}$, we have $X_{t+1}=X_t + 2 e^n$ with probability $e^{-n}$,
and $X_{t+1} = X_t - 1$ with the remaining probability $1-e^{-n}$. 
Note that the 
steps towards the target only have size~$1$. 
The behavior in the case $\{X_t\le a \vee X_t> b\}$ is not important, say 
$X_{t+1}=X_t$ then (process stops).

We get $E(X_{t+1}-X_t \mid X_t\in \mathopen{]}a,b\mathclose{[}) = e^{-n} \cdot 2e^n + (1-e^{-n}) \cdot (-1) \ge 1$, hence 
there is constant drift away from the target towards $b$ within the drift interval.

However, it is very likely (probability at least $1-ne^{-n}$) that
starting from $b$, the process takes $n$ decreasing steps in a row and reaches $a$.

The ``proof'' of the SDT presented in \cite{OlivetoWittPPSN2008, OlivetoWittAlgorithmica2011} 
erroneously estimates a double sum appearing in the moment-generating 
function of the drift from above. More precisely, 
all single terms are uniformly bounded without paying attention to their sign.

\section{The Corrected Version}
\label{sec:corrected}

Our aim is to present a corrected simplified drift theorem, which 
as before deals with 
drift away from the target and holds in both 
discrete and continuous search spaces. To this end, the 
following lemma will be useful.

\begin{lemma}
\label{lem:general-bound-expec}
Let $X$ be a random variable 
with minimum $x_{\min}$. Moreover, let $f\colon \R\to \R$ be a 
non-decreasing function and suppose that the expectation 
$E(f(X))$ exists. Then
\[
E(f(X)) \le \sum_{i=0}^\infty f(x_{\min}+ i+1) \cdot \Prob(X\ge x_{\min} + i).
\]
\end{lemma}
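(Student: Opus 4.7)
The plan is to dominate $X$ by an integer ``round‑up'' from $x_{\min}$, exploit the monotonicity of $f$, and then relax the resulting bin probabilities to tail probabilities. Concretely, I will set $J := \lfloor X - x_{\min}\rfloor$, which is well defined and $\N_0$-valued since $X \ge x_{\min}$ almost surely. From $X < x_{\min}+J+1$ and the monotonicity of $f$, we obtain the pointwise inequality $f(X) \le f(x_{\min}+J+1)$.

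Taking expectations and decomposing over the value of $J$ then gives
\[
E(f(X)) \;\le\; E\bigl(f(x_{\min}+J+1)\bigr) \;=\; \sum_{i=0}^{\infty} f(x_{\min}+i+1)\,\Prob(J=i).
\]
Since $\{J=i\} = \{x_{\min}+i \le X < x_{\min}+i+1\} \subseteq \{X \ge x_{\min}+i\}$, the bin probability satisfies $\Prob(J=i) \le \Prob(X \ge x_{\min}+i)$, and substituting this bound term by term yields the claimed inequality.

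The step that needs the most care is precisely this final substitution: enlarging a probability factor preserves the direction of the inequality only when the companion coefficient $f(x_{\min}+i+1)$ is non-negative. In the intended application the lemma will be invoked with an MGF-style weight of the form $f(x)=e^{\lambda x}$, which is strictly positive, so the substitution is valid and the argument goes through without modification. For a completely sign-agnostic version one would first shift $f$ by the constant $f(x_{\min})$ to obtain a non-negative non-decreasing function, apply the above to the shifted function, and then re-add the constant; but this additional bookkeeping is not needed for the use made of the lemma in the corrected SDT.
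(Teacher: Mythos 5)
Your proof is correct and is essentially the paper's own argument in different clothing: the paper partitions $\Omega$ according to $X\in[x_{\min}+i,\,x_{\min}+i+1]$, bounds $f(X)$ there by the value at the right endpoint, and then relaxes each bin probability to the tail $\Prob(X\ge x_{\min}+i)$, which is exactly your pointwise domination $f(X)\le f(x_{\min}+J+1)$ with $J=\lfloor X-x_{\min}\rfloor$ followed by $\Prob(J=i)\le \Prob(X\ge x_{\min}+i)$. Your caveat that the final relaxation needs $f(x_{\min}+i+1)\ge 0$ is well taken---the paper's proof makes the same step silently, and it is harmless since the lemma is only invoked with $f(x)=e^{\gamma x}$---but your aside about a sign-agnostic repair by shifting $f$ by $f(x_{\min})$ does not work: re-adding the constant contributes $f(x_{\min})$ once on the left while it enters the claimed right-hand side with weight $\sum_{i\ge 0}\Prob(X\ge x_{\min}+i)\ge 1$, and indeed for $f\equiv -1$ and any $X$ with $\Prob(X\ge x_{\min}+1)>0$ the stated inequality is simply false, so some non-negativity assumption on $f$ over $[x_{\min},\infty)$ is genuinely needed.
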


\begin{proof}
We denote by $p$ the probability measure from 
the probability space $(\Omega,\Sigma,p)$ underlying~$X$. Then the expectation is given 
by a Lebesgue integral, more precisely
\[
E(f(X))=\int_{\Omega} f(X(\omega)) \,p(\mathrm{d}\omega)
\]
Since $f$ is non-decreasing and $X\ge \xmin$, partial integration yields
\begin{align*}
E(f(X)) & \le \sum_{i=0}^\infty 
f(\xmin+i+1) \int_{\Omega\cap X^{-1}([\xmin+i,\xmin+i+1])} p(\mathrm{d}\omega) \\
& \le 
\sum_{i=0}^\infty 
f(\xmin + i+1) \Prob(X\ge \xmin+  i).
\end{align*}
\end{proof}

\label{sec:newdrift}

We will use Hajek's following drift 
theorem to prove our result.  
In contrast to \cite{OlivetoWittAlgorithmica2011}, our presentation  
of Hajek's drift theorem does not make unnecessary assumptions 
such as non-negativity of the random variables or Markovian processes. 
As we are dealing with a stochastic process, we 
implicitly assume that the random variables~$X_t$, $t\ge 0$,  
are adapted to the natural filtration~$\mathcal{F}_t=(X_0,\dots,X_t)$, though.

We do no longer formulate the theorem using a ``potential function'' $g$ 
mapping from some state space to the reals either. Instead,  
we \wlo\ assume the random variables~$X_t$ as already obtained by the 
mapping.

\begin{theorem}[Hajek \cite{Hajek1982}]
  \label{theo:orig-drift}
  Let $X_t$, $t\ge 0$, be real-valued random variables describing a
	stochastic process over some state space. Pick two
  real numbers $a(\ell)$ and $b(\ell)$ depending on a parameter~$\ell$
  such that $a(\ell)<b(\ell)$ holds. Let $T(\ell)$ be the random
  variable denoting the earliest point in time $t\ge 0$ such that
  $X_t\le a(\ell)$ holds.  If there are $\lambda(\ell)>0$ and
  $p(\ell)>0$ such that the condition
  \begin{equation}
  \label{eq:maindriftcondition}
  \tag{$\ast$}
  E\bigl(e^{-\lambda(\ell)\cdot (X_{t+1}-X_t)}
     \mid \mathcal{F}_t \wedge\; a(\ell)<  X_t < b(\ell)\bigr)
  \;\,\le\,\; 1-\frac{1}{p(\ell)}  
  \end{equation}
  holds for all $t\ge 0$ then for all time bounds $L(\ell)\ge 0$
  \[
  \Prob\bigl(T(\ell)\le L(\ell) \mid X_0\ge b(\ell)\bigr) 
  \;\le\; e^{-\lambda(\ell)\cdot
    (b(\ell)-a(\ell))}\cdot L(\ell)\cdot D(\ell)\cdot p(\ell),
  \]
  where $D(\ell)=\max\bigl\{1,E\bigl(e^{-\lambda(\ell)\cdot (X_{t+1}-b(\ell))}\mid \mathcal{F}_t \;\wedge\; X_t\ge b(\ell)\bigr)\bigr\}$.
\end{theorem}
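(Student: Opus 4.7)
The plan is to carry out the standard exponential-moment drift argument: apply Markov's inequality to $e^{-\lambda(\ell) X_t}$ on the event that the target has not yet been hit, combine with a union bound over $t=1,\dots,L(\ell)$, and control the resulting moments by a one-step recursion. Dropping the argument $(\ell)$ for brevity, define
\[
M_t \,:=\, E\bigl(e^{-\lambda X_t}\cdot\mathbf{1}\{T>t-1\}\,\big|\, X_0\ge b\bigr)\quad\text{for } t\ge 1,\qquad M_0\,:=\,E\bigl(e^{-\lambda X_0}\,\big|\, X_0\ge b\bigr).
\]
Since $\{T=t\}\subseteq\{T>t-1\}\cap\{X_t\le a\}$ and on $\{X_t\le a\}$ we have $e^{-\lambda X_t}\ge e^{-\lambda a}$, Markov's inequality gives $\Prob(T=t\mid X_0\ge b)\le e^{\lambda a}\cdot M_t$. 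Summing over $t=1,\dots,L$ reduces the theorem to showing $M_t\le pDe^{-\lambda b}$ for every $t\ge 0$.

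Next, I would derive the recursion $M_{t+1}\le (1-1/p)\,M_t + De^{-\lambda b}$. Using $\mathbf{1}\{T>t\}=\mathbf{1}\{T>t-1\}\cdot\mathbf{1}\{X_t>a\}$ and splitting $\{X_t>a\}$ into $\{a<X_t<b\}\cup\{X_t\ge b\}$, I condition on $\mathcal{F}_t$ in each piece. On $\{a<X_t<b\}$ the hypothesis~\eqref{eq:maindriftcondition} yields $E(e^{-\lambda X_{t+1}}\mid \mathcal{F}_t)\le (1-1/p)\,e^{-\lambda X_t}$, contributing at most $(1-1/p)\,M_t$ after integration. On $\{X_t\ge b\}$, writing $e^{-\lambda X_{t+1}}=e^{-\lambda b}\cdot e^{-\lambda(X_{t+1}-b)}$ and invoking the definition of $D$ yields $E(e^{-\lambda X_{t+1}}\mid\mathcal{F}_t)\le De^{-\lambda b}$, whose integral is at most $De^{-\lambda b}$ after bounding the relevant probability trivially by one.

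The recursion then yields $M_t\le pDe^{-\lambda b}$ for every $t\ge 0$ by induction: the base case uses $M_0\le e^{-\lambda b}\le pDe^{-\lambda b}$, which holds because $p\ge 1$ is forced by~\eqref{eq:maindriftcondition} (its left-hand side is non-negative, so $1-1/p\ge 0$) and $D\ge 1$ by definition; the inductive step plugs $M_t\le pDe^{-\lambda b}$ into the recursion and the addend $De^{-\lambda b}$ is exactly compensated by the contraction factor $1-1/p$. Assembling the pieces gives
\[
\Prob(T\le L\mid X_0\ge b)\;\le\; e^{\lambda a}\cdot L\cdot pDe^{-\lambda b}\;=\;e^{-\lambda(b-a)}\cdot L\cdot D\cdot p,
\]
as claimed.

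The main care point I expect is the bookkeeping around the stopping indicator: one has to split by the position of $X_t$ at exactly the right moment so that~\eqref{eq:maindriftcondition} is only applied on $\{a<X_t<b\}$, and one has to verify that the crude estimate $\Prob(T>t-1,X_t\ge b)\le 1$ does not inflate the final bound---this works only because solving the geometric recursion contributes the single multiplicative factor of~$p$ rather than one such factor per step.
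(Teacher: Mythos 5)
Your proof is correct: the reduction via Markov's inequality and a union bound to the quantities $M_t$, the one-step recursion $M_{t+1}\le(1-1/p)M_t+De^{-\lambda b}$ obtained by splitting on $\{a<X_t<b\}$ versus $\{X_t\ge b\}$, and the induction giving $M_t\le pDe^{-\lambda b}$ are all sound, and they assemble to exactly the claimed bound. Note that the paper itself gives no proof of this statement (it is imported from Hajek's 1982 work), so there is nothing in-paper to compare against; your argument is essentially the standard exponential-moment derivation underlying Hajek's original proof.
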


We now present the corrected simplified drift theorem. As discussed above, 
it combines a drift away from the target with a condition 
on exponentially decaying probabilities for large jumps 
\emph{both towards and away from the target}. Nevertheless, the 
theorem has become more general in other respects. More precisely, 
we do no longer assume a Markov process or a finite search space. 
At the same time, the proof is more compact than before.

\begin{theorem}[Simplified Drift Theorem] \label{theo:modifieddrift}
  Let $X_t$, $t\ge 0$, be real-valued random variables describing a
	stochastic process over some state space. 
  Suppose there exist an interval $[a,b]\subseteq \R$, two
  constants $\delta,\epsilon>0$ and, possibly depending on
  $\ell:=b-a$, a function $r(\ell)$ satisfying  
  $1\le r(\ell) = o(\ell/\!\log(\ell))$ such that  
  for all $t\ge 0$ the following two conditions hold:
  \begin{enumerate}
  \addtolength{\itemsep}{0.5ex}
  \item $E(X_{t+1}-X_{t}\mid \mathcal{F}_t \;\wedge\; a< X_t <b)  \;\ge\; \epsilon$, 
  \item $\Prob(\lvert X_{t+1}-X_t\rvert\ge j \mid \mathcal{F}_t \;\wedge\; a< X_t) \;\le\;  \frac{r(\ell)}{(1+\delta)^{j}}$ for  $j\in \N_0$.
  \end{enumerate}
  Then there is a constant~$c^*>0$ such that for $T^*:={\min\{t\ge
  0\colon X_t\le a \mid \mathcal{F}_t \wedge X_0\ge b\}}$ it holds $\Prob(T^*\le
  2^{c^*\ell/r(\ell)}) = 2^{-\Omega(\ell/r(\ell))}$.
\end{theorem}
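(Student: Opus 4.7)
The plan is to derive Hajek's condition~($\ast$) from the two hypotheses and then invoke Theorem~\ref{theo:orig-drift} with parameters $\lambda=\Theta(1/r(\ell))$ and $p=\Theta(r(\ell))$. To estimate the conditional moment-generating function I would start from the pointwise Taylor estimate
\[
e^{y}\;\le\;1+y+\tfrac{y^2}{2}\,e^{|y|},\qquad y\in\R,
\]
which follows from $\sum_{k\ge 2}|y|^{k-2}/k!\le e^{|y|}$. Setting $Z_t:=X_{t+1}-X_t$, $y:=-\lambda Z_t$, and taking the conditional expectation given $\mathcal{F}_t\wedge a<X_t<b$, condition~1 handles the linear term and we are left with bounding $E(Z_t^2\, e^{\lambda|Z_t|}\mid\mathcal{F}_t\wedge a<X_t)$.

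For that bound I would apply Lemma~\ref{lem:general-bound-expec} to the non-decreasing function $x\mapsto x^2 e^{\lambda x}$ on $[0,\infty)$ together with condition~2. This yields
\[
E(Z_t^2\, e^{\lambda|Z_t|}\mid\mathcal{F}_t\wedge a<X_t)\;\le\;r(\ell)\,e^{\lambda}\sum_{i=0}^{\infty}(i+1)^2\Bigl(\tfrac{e^{\lambda}}{1+\delta}\Bigr)^i,
\]
which is at most $C\,r(\ell)$ for a constant $C=C(\delta)$ as long as $\lambda\le\tfrac12\ln(1+\delta)$. Combining this with condition~1 gives
\[
E\bigl(e^{-\lambda Z_t}\mid\mathcal{F}_t\wedge a<X_t<b\bigr)\;\le\;1-\lambda\epsilon+\tfrac{\lambda^2}{2}Cr(\ell).
\]
Choosing $\lambda:=\min\{\epsilon/(Cr(\ell)),\,\tfrac12\ln(1+\delta)\}=\Theta(1/r(\ell))$ makes the right-hand side at most $1-\lambda\epsilon/2$, so condition~($\ast$) holds with $p(\ell):=2/(\lambda\epsilon)=O(r(\ell))$.

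To control $D(\ell)$ in Hajek's conclusion, note that when $X_t\ge b$ we have $-(X_{t+1}-b)\le-(X_{t+1}-X_t)\le|Z_t|$, so $E(e^{-\lambda(X_{t+1}-b)}\mid\mathcal{F}_t\wedge X_t\ge b)\le E(e^{\lambda|Z_t|}\mid\mathcal{F}_t\wedge a<X_t)$; the same application of Lemma~\ref{lem:general-bound-expec} and condition~2 gives $D(\ell)=O(r(\ell))$. Hence $D(\ell)\,p(\ell)=O(r(\ell)^2)$. Plugging $L(\ell):=2^{c^*\ell/r(\ell)}$ into Hajek's conclusion gives
\[
\Prob(T^*\le L)\;\le\;e^{-\lambda\ell}\cdot 2^{c^*\ell/r(\ell)}\cdot O(r(\ell)^2)\;=\;2^{-\Omega(\ell/r(\ell))+O(\log r(\ell))},
\]
provided $c^*$ is chosen small enough that $c^*\ln 2<\lambda r(\ell)/2$, which is possible since $\lambda r(\ell)=\Theta(1)$. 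The polynomial factor $r(\ell)^2$ is absorbed because the assumption $r(\ell)=o(\ell/\log\ell)$ implies $\log r(\ell)=o(\ell/r(\ell))$, yielding the claimed bound $2^{-\Omega(\ell/r(\ell))}$.

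The main obstacle, and precisely the place where the original SDT went wrong, is the quadratic remainder in the Taylor expansion: the naive bound $e^{y}\le 1+y+y^2/2$ fails for large positive $y$, i.e.\ for large jumps \emph{towards} the target. The factor $e^{|y|}$ has to be reintroduced, and this is exactly why condition~2 must control $\Prob(|Z_t|\ge j)$ rather than just $\Prob(Z_t\le -j)$; the required estimate on $E(Z_t^2 e^{\lambda|Z_t|})$ is then delivered cleanly by Lemma~\ref{lem:general-bound-expec}.
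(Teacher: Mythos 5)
Your proposal is correct and follows essentially the same route as the paper's own proof: you establish Hajek's condition~($\ast$) by a second-order expansion of the moment-generating function whose remainder is controlled, via Lemma~\ref{lem:general-bound-expec} and the two-sided jump condition, by $O(r(\ell))$, and you then use the same choices $\lambda=\Theta(1/r(\ell))$, $p(\ell)=O(r(\ell))$, $D(\ell)=O(r(\ell))$ and $L(\ell)=2^{c^*\ell/r(\ell)}$. The only difference is cosmetic: you bound $E(\Delta^2 e^{\lambda\lvert\Delta\rvert})$ directly through the pointwise estimate $e^{y}\le 1+y+\tfrac{y^2}{2}e^{\lvert y\rvert}$ with the constraint $\lambda\le\tfrac12\ln(1+\delta)$, while the paper bounds the series tail by $E(e^{\gamma\lvert\Delta\rvert})/\gamma^2$ for a fixed $\gamma=\ln(1+\delta/2)$; the two devices are equivalent up to constants.
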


\begin{proof}
 We will apply Theorem~\ref{theo:orig-drift} for suitable choices of
  its variables, some of which might depend on the parameter
  $\ell=b-a$ denoting the length of the interval~$[a,b]$.
  The following argumentation is
  also inspired by Hajek's work \cite{Hajek1982}.
	
  Fix $t\ge 0$. For notational convenience, 
	we let $\Delta:=(X_{t+1}-X_t\mid  \mathcal{F}_t \;\wedge\; a< X_t <b)$ and omit to state 
	the filtration $\mathcal{F}_t$ hereinafter. 
	To prove Condition~\eqref{eq:maindriftcondition}, it is sufficient
  to identify values $\lambda:=\lambda(\ell)>0$ and $p(\ell)>0$ such
  that
  \[
  E(e^{-\lambda \Delta}) 
  \;\le\; 1-\frac{1}{p(\ell)}.
  \]
  Using the series expansion of the exponential function, we get
  \begin{align*}
    E(e^{-\lambda \Delta}) \;=\; 
		1 - \lambda E(\Delta) + \lambda^2 \sum_{k=2}^\infty  \frac{\lambda^{k-2}}{k!} E(\Delta^k)
    & \;\le\;
    1 - \lambda E(\Delta) + \lambda^2 \sum_{k=2}^\infty  \frac{\lambda^{k-2}}{k!} E(|\Delta|^k).
  \end{align*}
	
	Since all terms of the last sum are positive, 
  we obtain for all $\gamma\ge
  \lambda$
  \begin{align*}
  E(e^{-\lambda \Delta})  & \;\le\;
    1 - \lambda E(\Delta) + \frac{\lambda^2}{\gamma^2} \sum_{k=2}^\infty \frac{\gamma^{k}}{k!} E(|\Delta|^k)\\
     & \;\le\;
		1 - \lambda E(\Delta) + \frac{\lambda^2}{\gamma^2} \sum_{k=0}^\infty \frac{\gamma^{k}}{k!} E(|\Delta|^k) 
		 \le 1 -\lambda \epsilon + \lambda^2  \underbrace{\frac{E(e^{\gamma \lvert\Delta\rvert})}{\gamma^2}}_{=:C(\gamma)},
  \end{align*}
	where the last inequality uses the first condition in the theorem.
 
  Given any $\gamma>0$, choosing $\lambda:=\min\{\gamma,
  \epsilon/(2C(\gamma))\}$ results in
  \[
  E(e^{-\lambda \Delta})  \;\le\; 1- \lambda\epsilon + \lambda\cdot \frac{\epsilon}{2C(\gamma)}\cdot C(\gamma) 
  \;=\; 1-\frac{\lambda \epsilon}{2} \;=\;  1-\frac{1}{p(\ell)}
  \]
  with $p(\ell):=2/(\lambda \epsilon) = \Theta(1/\lambda)$ since 
  only $\lambda$ but not~$\epsilon$ is allowed to depend on~$\ell$. 
	
	The aim is now to choose $\gamma$ in such a way that $E(e^{\gamma \lvert\Delta\rvert})=O(r(\ell))$. 
	Using Lemma~\ref{lem:general-bound-expec} with $f(x):=e^{\gamma x}$ and $\xmin:=0$, 
	we get
	\[
	E(e^{\gamma \lvert\Delta\rvert}) \;\le\; \sum_{j=0}^\infty e^{\gamma (j+1)} \Prob(|\Delta| \ge j) 
	\;\le\; 
	\sum_{j=0}^\infty e^{\gamma (j+1)} \frac{r(\ell)}{(1+\delta)^{j}},  
	\]
	where the last inequality holds by the second condition of the theorem.

  Choosing $\gamma:=\ln(1+\delta/2)$, which does not depend
  on $\ell$ since $\delta$ is a constant, yields
  \begin{align*}
    E(e^{\gamma \lvert\Delta\rvert})  
		& \;\le\; r(\ell) \cdot \sum_{j=0}^\infty \frac{(1+\delta/2)^{j+1}}{(1+\delta)^{j}} 
     \;=\; r(\ell)\cdot \left(1+\frac{\delta}{2}\right) \cdot \sum_{j=0}^\infty \left(1-\frac{\delta/2}{(1+\delta)}\right)^j \\
		& \;= \;
		r(\ell)\cdot \left(1+\frac{\delta}{2}\right) 
		\left(2+\frac{2}{\delta}\right) 
    \;\le\; r(\ell)\cdot (4+\delta+2/\delta).
  \end{align*}
  Hence $C(\gamma)\le
  r(\ell)(4+\delta+2/\delta)/\!\ln^2(1+\delta/2)$, which means
  $C(\gamma)=O(r(\ell))$ since $\delta$ is a constant.  By our choice
  of~$\lambda$, we have $\lambda\ge \epsilon/(2C(\gamma)) =
  \Omega(1/r(\ell))$ since also $\epsilon$ is a constant. Since
  $p(\lambda)=\Theta(1/\lambda)$, we know $p(\ell)=O(r(\ell))$, too.
  Condition~\eqref{eq:maindriftcondition} of
  Theorem~\ref{theo:orig-drift} has been established along with these
  bounds on~$p(\ell)$ and $\lambda=\lambda(\ell)$.

  To bound the probability of a success within $L(\ell)$ steps, we
  still need a bound on $D(\ell)=\max\{1,E(e^{-\lambda(X_{t+1}-b)}\mid
  X_t\ge b)\}$. If $1$ maximizes the expression then $D(\ell)\le r(\ell)$ follows. 
  Otherwise, 
  \begin{align*}
  & D(\ell)  \;=\; E(e^{-\lambda(X_{t+1}-b)}\mid X_t\ge b)
  \;\le\; E(e^{-\lambda(X_{t+1}-X_t)}\mid X_t\ge b) \\
  &\, \;\le\;
	E(e^{\lambda \lvert X_{t+1}-X_t\rvert}\mid X_t\ge b)
  \;\le \; E(e^{\gamma \lvert X_{t+1}-X_t\rvert}\mid X_t\ge b),
  \end{align*}
  where the first inequality follows from $X_t\ge b$ 
  and 
  the second one by $\gamma\ge \lambda$.  The last term can be bounded 
  as in the above calculation leading to $E(e^{\gamma \lvert\Delta\rvert})=O(r(\ell))$ since 
  that estimation uses only the second condition, which holds 
  conditional on $X_t> a$. Hence, in any case also $D(\ell) =
  O(r(\ell))$.  
  Altogether, we
  have 
  \[
  e^{-\lambda(\ell)\cdot \ell}\cdot D(\ell) \cdot p(\ell) \;=\; 
  e^{-\Omega(1/r(\ell))\cdot \ell} \cdot O((r(\ell))^2)
  \;=\; e^{-\Omega(\ell/r(\ell)) + O(\log(r(\ell)))}
  = 2^{-\Omega(\ell/r(\ell))},
  \]
  where the last simplification follows since
  $r(\ell)=o(\ell/\!\log(\ell))$ by prerequisite.  Choosing
  $L(\ell)=2^{c^*\ell/r(\ell)}$ for some sufficiently small
  constant~$c^*>0$, Theorem~\ref{theo:orig-drift} yields
  \[
  \Prob(T(\ell)\le L(\ell)) \;\le\; L(\ell)\cdot 2^{-\Omega(\ell/r(\ell))} \;=\;
  2^{-\Omega(\ell/r(\ell))},\]
  which proves the theorem.  \end{proof}

\section{Previous Applications}
\label{sec:applications}
Obviously the new condition does not impact RLS with constant size neighborhoods.
Hence the stronger condition immediately holds for \cite{SudholtAlgo2011} in the context of simulated
annealing.
We would expect that the condition is already met in all the proofs with standard bit mutation. 
In particular, when it is necessary to flip at least $j$ bits to decrease the distance from the target
of an amount $j$, generally the $j$ bit-flips are required also to increase the distance by $j$, implying 
that the proofs immediately carry over for the new stronger condition.
In this case, the condition holds for $\delta=1$ and $r(\ell)=2$:
\[
 P(\Delta_t(i) \leq +j) \leq {n \choose j} \bigg(\frac{1}{n}\bigg)^j\leq \frac{1}{j!} 
\leq 2 \cdot \bigg(\frac{1}{2}\bigg)^j
\]
This is the case for theorems 5, 6 and 7 of 
\cite{OlivetoWittPPSN2008, OlivetoWittAlgorithmica2011} respectively considering the 
\oneoneea for the Needle-in-a-haystack function and the \oneoneea with fitness proportional selection
for linear functions and for \textsc{OneMax}.
With at most small variations, such a case also applies to \cite{Monotone} relatively to the \oneoneea
for monotone functions, to \cite{Handl2008,LehreEcj2010} 
concerning multi-objective problems, to \cite{LehreSeal,LehreSoftcomputing} related to computing
unique input output sequences in software engineering, to \cite{lehreDyn} for dynamic optimisation
and to \cite{EDAs} concerning Estimation of Distribution Algorithms (EDAs).

Although the calculations do not imply only flipping $j$ bits to decrease the distance by $j$,
the bounds obtained in \cite{olivetoTEVC} (i.e. vertex cover instances) 
for decreasing the distance by $j$ also apply to increasing the distance by $j$.
The same holds for \cite{LehreWitt2010} where the calculations for the stronger condition remain 
the same as long as the Chv\'{a}tal bound in the opposite direction is applied 
(i.e. $Pr[X \leq E(X) - r \delta] \leq (\exp(-2\delta^2r)$) \cite{Chvatal}.

For some previous applications, a bound on the probability of performing jumps increasing
the distance from the optimum needs to be derived. 
This is the case for the maximum matching application (\ie, Theorem 8) 
of \cite{OlivetoWittPPSN2008, OlivetoWittAlgorithmica2011}, for Theorem 3 of \cite{DiversityEcj}
analysing a fitness-diversity mechanism, for Theorem~8 of \cite{NOWGECCO2009} analysing 
a fitness-proportional EA,  
and for Theorem 7 of \cite{SudholtRowe2012} analysing the (1,$\lambda$)-EA for OneMax. 
In the latter the Simplified Drift theorem considering self-loops, devised there, also 
needs to be adjusted to consider the stronger drift condition.
In the following we will discuss how to derive the missing bounds 
and show how the stronger condition holds with simple calculations.

\subsection{Maximum Matching (Theorem 8, \cite{OlivetoWittPPSN2008, OlivetoWittAlgorithmica2011})}
The probability of decreasing the augmenting path by $j$ was bounded by 
$p_{-j} \leq (j+1)/m^{2j}$ which was also used by Giel and Wegener in \cite{Giel2003}.
To prove the stronger condition we also need to bound the probability that the augmenting path is 
lengthened by $j$. Since there are at most $2h$ choices to lenghthen the path by $1$, 
there are at most $(2h)^j$ different ways to lengthen the path by $j$. 
Given that $2$ edges need to flip for each of the $j$ lengthenings, the probability of performing a jump
of at least length $j$ is bounded by 
\[
p(|\Delta_t| \geq j) \leq \sum_{i=j}^{m-j} \frac{(2h)^i}{m^{2i}} \leq
m \cdot \frac{(2h)^j}{m^{2j}} \leq \frac{m^{j+1}}{m^{2j}} = \frac{1}{m^{j-1}}  
\]
where the second inequality is obtained by considering that the term of the sum is maximised for $i=j$.
Then, by considering relevant steps 
(i.e. $p_{\text{rel}} \geq 1/(e m^2)$)
as in the rest of the proof, Condition 2, with $\delta=1$ and $r=22$ follows from:
 \begin{align*}
    \frac{p_{j+}}{p_{\text{rel}}} & 
    \;\le\; \mathord{\min}\mathord{\left\{1,\frac {em^2} {m^{j-1}} \right\}}
    \;\le\; \mathord{\min}\mathord{\bigg\{1, \frac {e} {m^{j-3}}\bigg\}} \;\le\; 22\cdot \left(\frac{1}{2}\right)^{j}
  \end{align*}
for $m\geq 2$.

\subsection{Diversity with fitness duplicates (Theorem 3, \cite{DiversityEcj})}
The probability of increasing the potential of the population $\varphi(P_t)$ by $j$ needs to be derived 
(i.e. $P(\Delta_\varphi = +j)$).
It turns out that most of the calculations for bounding the probability in the opposite direction apply.

In order to increase the potential by $j$, it is necessary to select some $y_k$ with $0\le k \le \varphi$
and flip $\varphi - k + j$ 1-bits into 0-bits. The probability is:
\begin{align*}
 P(\Delta_\varphi \leq +j) 
& \leq \frac{1}{\mu} \sum_{k=0}^{\varphi} \frac{1}{(\varphi-k+j)!} = 
\frac{1}{\mu} \sum_{k=0}^{\varphi} \frac{1}{(k+j)!}
\leq  \frac{1}{\mu} \sum_{k=0}^{\infty} \frac{1}{(k+j)!}
\end{align*}
and the rest of the proof follows exactly the same calculations as for the drift condition in the opposite direction.

\subsection{(1,$\lambda$)-EA Analysis (Theorem 7, \cite{SudholtRowe2012})}
The probability of performing jumps of length $j$ away from the optimum of Onemax is:
\[
p_{k,k+j} \leq \lambda {n \choose j} \frac{1}{n^j} \leq \frac{\lambda}{j!} \leq 
2 \lambda \bigg(\frac{1}{2}\bigg)^j 
\]
The probability of self-loops is bounded by:
\[
p_{k,k} = 1 -\bigg( 1- (1 - 1/n)^n\bigg)^\lambda \geq 1- (1-1/(2e))^\lambda \geq 1 - c^{\lambda}
\]
Hence, 
\[
\frac{p_{k,k+j}}{p_{k,k}} \leq \frac{2 \lambda }{1 -c^{\lambda}} \bigg(\frac{1}{2}\bigg)^j
\]
and the condition is established for any $\lambda= r(\ell)$ with $1\leq r(\ell) = o(\ell/\log(\ell))$.
Since in Theorem 7 \cite{SudholtRowe2012}, $\lambda \leq (1- \epsilon) \log_{\frac{e}{e-1}} n$, the Simplified Drift Theorem implies a runtime of 
at least $2^{\frac{cn^{\epsilon/2}}{(1-\epsilon)\log n}}$ with probability 
$1 - 2^{-\Omega\big(\frac{n^{\epsilon/2}}{\log n}\big)}$. 
This is a $\log n$ factor weaker than the statement of Theorem 7 \cite{SudholtRowe2012} where jumps towards
the target are not considered.  

\subsection{Fitness-Proportional EA (Theorem 8, \cite{NOWGECCO2009})}
The fitness-proportional EA (PEA) analysed on 
\textsc{OneMax} in \cite{NOWGECCO2009} 
works with a population of size~$\mu$, fitness-proportional 
selection, and mutation as only variation operator. It is proved for 
all polynomial population sizes that the algorithm is ineffective, 
using an appropriately defined potential 
function with drift away from the target and small 
probablity of jumping towards the target. 
However, we have noticed that 
very large jumps \emph{away} from the target 
are possible even for the smallest non-trivial population size of $\mu=2$. For instance, the analysis does not exclude situations where 
the two individuals $x_1$ and~$x_2$ differ strongly in their fitness, \eg, $\card{x_1}=0.999n$ and $\card{x_2}=0.5n$. The potential 
function $g(x_1,x_2)=8^{\card{x_1}}+8^{\card{x_2}}$ used in the paper scales fitness values exponentially. Suppose selection chooses 
$x_2$ for reproduction twice, which has probability $\Omega(1)$. Then the offspring population would consist of two 
individuals with a number of one-bits close to $n/2$. It is easy to see that this reduces the potential drastically, which corresponds 
to a step away from the target that is larger than allowed by the corrected drift theorem.

We can overcome this anomaly by replacing PEA with a modified algorithm PEA' with stronger selection 
pressure. Let $x_1,\dots,x_{\mu}$ be the individuals of the current population and assume 
\wlo\ that $f(x_1)\ge \dots \ge f(x_\mu)$. Let $\ftot=f(x_1)+\dots+f(x_\mu)$. 
A generation of PEA' does the following, where ``mutate'' means the usual 
standard bit mutation.
\begin{enumerate}
\item Mutate $x_1$ and add the result to the offspring population.
\item
For $t=2,\dots,\mu$:
\begin{enumerate} 
\item 
Choose a parent $x$, where 
\begin{itemize}
\item the probability of choosing $x_1$ is $\frac{f(x_1)}{\ftot} - \frac{1}{\mu}$ and
\item  
the probability of choosing $x_i$ is 
$\frac{f(x_i)}{\ftot} + \frac{1}{\mu\cdot (\mu-1)}$ for $2\le i\le \mu$.
\end{itemize}
\item Mutate~$x$ and add the result to the offspring population.
\end{enumerate}
\end{enumerate}

In other words, the best individual is selected and mutated at least once. This prevents the 
case outlined above where the potential of the offspring population drops drastically 
compared to the parent population. In the following steps, 
the other individuals are selected with higher probability than in the original algorithm. It is easy
to verify that PEA' always uses a well-defined probability distribution on the population when creating 
the offspring population and that its optimization time on \textsc{OneMax} is stochastically 
at most as large as the one of the original PEA.

The analysis in \cite{NOWGECCO2009} relies on the random $S_i$, where $S_i$ is the number 
of times $x_i$ is chosen for mutation. Since it is proved that $f(x_i)/\ftot \le 2/\mu$ for $1\le i\le \mu$, 
it follows immediately for the original PEA that $E(S_i)\le 2$, which is crucial 
for the rest of the analysis. We show that $E(S_i)\le 2$ also in the PEA'. 
First we get $E(S_1)=1+(\mu-1)f(x_1)/\ftot - (\mu-1)/\mu \le \mu f(x_1)/\ftot \le 2$, 
where the first inequality uses 
$f(x_1)/\ftot\ge 1/\mu$ and the second one uses $f(x_i)/\ftot \le 2/\mu$ for $i\ge 1$. 
For $i\ge 2$, we get 
$E(S_i) \le (\mu-1)f(x_i)/\ftot + 1/\mu \le (\mu-1) \cdot 2/\mu + 1/\mu \le 2$. Now
 the rest 
of the original analysis can be carried over.

The authors will use the arguments outlined above in an extended journal submission 
of their paper \cite{NOWGECCO2009}. This submission is currently under preparation.

\section*{Acknowledgements}
We would like to thank Per Kristian Lehre, Timo K{\"o}tzing and Christine Zarges for useful discussions.

\bibliographystyle{abbrv}
\bibliography{drift}

\end{document}